\newcommand{\D}{\mathcal{D}}
\newcommand{\limp}{\mathbin{\rightarrow}}
\NewDocumentCommand\until       {}{\mathbin{\mathsf{U}}}
\NewDocumentCommand\tomorrow    {}{\mathsf{X}}
\NewDocumentCommand\eventually  {}{\mathsf{F}}
\DeclareMathOperator\wnextvar{\bigcirc\kern-1.17em\sim}
\newcommand{\true}{\mathit{true}}
\newcommand{\false}{\mathit{false}}
\newcommand{\length}{\mathit{len}}
\newcommand{\LTL}{{\sc ltl}\xspace}
\newcommand{\LTLf}{{\sc ltl}\ensuremath{_f}\xspace}
\newcommand{\LTLp}{{\sc ltl}\ensuremath{_p}\xspace}
\newcommand{\DFA}{{\sc dfa}\xspace}
\newcommand{\Nat}{{\rm I\kern-.23em N}}
\newcommand{\tup}[1]{\langle #1 \rangle}
\newcommand{\tasks}{\ensuremath{\Sigma}\xspace}
\newcommand{\atask}{a}
\newcommand{\task}[1]{\textnormal{\textsc{#1}}}
\newcommand{\trace}{\ensuremath{\tau}\xspace}
\newcommand{\activetasks}[1]{\tasks_{#1}\xspace}
\newcommand{\othertasks}[1]{\overline{\activetasks{#1}}}
\newcommand{\temptrue}{\mathit{temp\_true}}
\newcommand{\tempfalse}{\mathit{temp\_false}}
\newcommand{\permtrue}{\mathit{perm\_true}}
\newcommand{\permfalse}{\mathit{perm\_false}}
\newcommand{\aut}{A}
\newcommand{\states}{Q}
\newcommand{\state}{q}
\newcommand{\istate}{\state_0}
\newcommand{\fstates}{F}
\newcommand{\trans}{\delta}
\newcommand{\statesize}{.5 cm}
\tikzstyle{dot}=[
\tikzstyle{state}=[circle,draw=black,very thick,minimum width=\statesize]
\tikzstyle{istate}=[state,initial,initial text={\!\!\!\!\!},initial distance=3mm]
\tikzstyle{truecolor}=[fill=green!30]
\tikzstyle{temptruecolor}=[fill=blue!30]
\tikzstyle{falsecolor}=[fill=red!30]
\tikzstyle{tempfalsecolor}=[fill=orange!30]
\tikzstyle{truestate}=[state,truecolor,line width=2pt]
\tikzstyle{temptruestate}=[state,temptruecolor,very thick]
\tikzstyle{falsestate}=[state,falsecolor,very thick,densely dotted]
\tikzstyle{tempfalsestate}=[state,tempfalsecolor,very thick,densely dashed]
\tikzstyle{monitorstate}=[
\tikzstyle{truemonstate}=[
\tikzstyle{temptruemonstate}=[
\tikzstyle{falsemonstate}=[
\tikzstyle{tempfalsemonstate}=[
\newcommand{\set}[1]{\{ #1 \}}
\title{Generating Counterfactual Explanations Under Temporal Constraints}
\author{
    Andrei Buliga\textsuperscript{\rm 1,2}\equalcontrib, Chiara Di Francescomarino\textsuperscript{\rm 3}\equalcontrib,
    Chiara Ghidini\textsuperscript{\rm 2}\equalcontrib,
    Marco Montali\textsuperscript{\rm 2}\equalcontrib,
    Massimiliano Ronzani\textsuperscript{\rm 1}\equalcontrib}
\newcommand{\dist}{\emph{distance}\xspace}
\newcommand{\spars}{\emph{sparsity}\xspace}
\newcommand{\impl}{\emph{implausibility}\xspace}
\newcommand{\dive}{\emph{diversity}\xspace}
\newcommand{\run}{\emph{runtime}\xspace}
\newcommand{\sat}{\emph{compliance}\xspace}
\newcommand{\firstheur}{{{\texttt{aPriori}}}\xspace}
\newcommand{\secondheur}{{{\texttt{Online}}}\xspace}
\newcommand{\mar}{\emph{Mutate-And-Retry}\xspace}
\newcommand{\ltlfbase}{\emph{$\text{Genetic}_\varphi$}\xspace}
\newcommand{\marshort}{\texttt{MAR}\xspace}
\newcommand{\ltlfbaseshort}{$\text{\texttt{Gen}}_\varphi$\xspace}
\newcommand{\massi}[1]{\textcolor{black}{#1}}
\newcommand{\btext}[1]{\textcolor{black}{#1}}
\newcommand{\submitapplication}{\task{apply}\xspace}
\newcommand{\automaticcheck}{\task{aut-chk}\xspace}
\newcommand{\manualcheck}{\task{man-chk}\xspace}
\newcommand{\informphone}{\task{phone}\xspace}
\newcommand{\informsms}{\task{sms}\xspace}
\newcommand{\informemail}{\task{email}\xspace}
\newcommand{\askappointment}{\task{book}\xspace}
\newcommand{\createoffer}{\task{offer}\xspace}
\newcommand{\senddocument}{\task{send-doc}\xspace}
\newcommand{\accept}{\task{ok}\xspace}
\newcommand{\wrongtask}[1]{\underline{#1}}
\newcommand{\formulacheck}{\varphi_{\mathit{chk}}}
\newcommand{\formulacomm}{\varphi_{\mathit{comm}}}
\newcommand{\rank}[1]{\ \hfill  \mathsf{(#1)}}
\newcommand{\winner}[1]{\mathbf{#1}}
\begin{document}

\maketitle

\begin{abstract}
Counterfactual explanations are one of the prominent eXplainable Artificial Intelligence (XAI) techniques, and suggest changes to input
data that could alter predictions, leading to more favourable outcomes.
Existing counterfactual methods do not readily apply to temporal domains, such as that of process mining, where data take the form of traces of activities that must obey to temporal background knowledge expressing which dynamics are possible and which not.  
Specifically, counterfactuals generated off-the-shelf may violate the background knowledge, leading to inconsistent explanations.
This work tackles this challenge by introducing a novel approach for generating \emph{temporally constrained counterfactuals}, guaranteed to comply
by design with background knowledge expressed in Linear Temporal Logic on
process traces (\LTLp).
We do so by infusing automata-theoretic techniques for \LTLp inside a genetic algorithm for counterfactual generation.
The empirical evaluation shows that the generated counterfactuals are temporally meaningful and more interpretable for applications involving temporal dependencies.

\end{abstract}

\begin{links}
\link{Code}{https://github.com/abuliga/AAAI2025-temporal-constrained-counterfactuals}
\label{linkcode}
\end{links}

\section{Introduction}
\label{sec:intro}

\btext{State-of-the-art Machine Learning efforts prioritise accuracy using ensemble and deep learning techniques, but their complexity makes their input-output mappings difficult to interpret. To address this, eXplainable Artificial Intelligence (XAI) techniques have emerged, which aid in the interpretation of predictions and promoting the adoption of advanced models~\cite{verma2020counterfactual,guidotti2022counterfactual,moc,priorxai}.}

Counterfactual explanations~\cite{verma2020counterfactual,guidotti2022counterfactual} are a key eXplainable Artificial Intelligence (XAI) technique. They provide insights into which changes should be applied to an input instance to alter the outcome of a prediction. Such explanations are hence particularly valuable for users who need to understand how different attributes or actions might influence an outcome of interest. State-of-the-art counterfactual generation methods often rely on optimisation techniques to find minimal changes to inputs leading to altering the predicted outcome. 

Existing methods do not readily apply to temporal domains, such as that of process mining \cite{DBLP:books/sp/Aalst16}, where data of interest consists of traces of activities generated by executing a business/work process or a plan. Such so-called \emph{process traces}~\cite{FioG18} are increasingly used in key domains like healthcare, business, and industrial processes, where the sequencing of activities is central. The main issue in these settings is that not all sequencings of activities make sense: traces are typically subject to temporal constraints, that is, must comply with temporal background knowledge expressing which dynamics are possible and which not. E.g., in a healthcare process a patient may enter triage only upon giving privacy consent. 

Systematic solutions for generating counterfactual explanations that comply with temporal background knowledge are still lacking~\cite{caisepaper}. This is a problem in areas such as Predictive Process Monitoring (PPM)~\cite{DBLP:books/sp/22/FrancescomarinoG22}, a widely established framework in process mining. Here, 
\btext{black-box} predictive models are typically used to predict 
the outcome of executions of business and work processes, often making interpretability a challenge. Integrating counterfactual explanations into such monitoring frameworks can enhance understanding by providing alternative trace executions to reach a more favourable outcome, only upon guaranteeing that such explanations make sense, that is, comply with temporal background knowledge. 

To tackle this open problem, we propose a novel optimisation based approach for generating \emph{temporally constrained counterfactual traces}, that is, counterfactual process traces that are guaranteed to comply with temporal background knowledge. To express such knowledge, we take the natural choice of adopting \LTLp, a temporal logic specifically designed for process traces~\cite{FioG18}, starting from the well established formalism of linear temporal logic on finite traces~\cite{DeGV13}. 
The backbone of our approach consists in infusing automata-theoretic techniques for \LTLp/\LTLf~\cite{DDMM22} within a Genetic Algorithm (GA) body. In particular, given an \LTLp formula $\varphi$ and a process trace $\trace$ satisfying $\varphi$, we introduce different strategies that inspect the automaton of $\varphi$ to suitably constrain the mutations introduced by the GA to alter $\trace$ towards counterfactual generation, so as to guarantee that the mutated version continues to satisfy $\varphi$.

An empirical evaluation on real-world and synthetic datasets demonstrates the effectiveness of our approach. Our results indicate that incorporating temporal background knowledge significantly improves the quality of counterfactual explanations, ensuring they satisfy \LTLp formulae of interest while satisfying general counterfactual desiderata.

\section{Running Example}
\label{sec:motexample}
Consider a scenario in which an estate agency is using a PPM system to forecast the renting of flats to its customers during the application and negotiation process. The system takes in input an ongoing trace and forecasts \textbf{Accept} in case of a successful application and negotiation and \textbf{Fail} otherwise. For example, consider the ongoing trace $\trace_1$, for which the predictive system forecasts a negotiation failure: \\[.4em]
\resizebox{\columnwidth}{!}{
$
\begin{aligned}
\trace_1 & = \submitapplication, \automaticcheck, \manualcheck, \informphone, \accept, \createoffer, \informphone, \askappointment 
\end{aligned}
$
}\\[.4em]
The trace starts with the customer submitting the application (\submitapplication) and the agency running an automatic check (\automaticcheck). Following a failure, a manual check (\manualcheck) is done, and the  customer is informed by phone (\informphone) of the application status, which is — in this case —  accepted as valid (\accept). An offer is created (\createoffer) and communicated to the customer by phone (\informphone). The customer then asks for an appointment (\askappointment) to discuss the offer.

To understand the reasons behind the prediction of \textbf{Fail} for $\trace_1$, the estate agency intends to obtain counterfactual explanations for $\trace_1$, that is, ongoing traces that suggest how to modify $\trace_1$ so that a successful negotiation is predicted. Examples are reported below, where \senddocument indicates that the customer sends the required documents to the estate agency, and \informsms (resp., \informemail) captures that the agency informs the customer via sms (resp., e-mail):\\[1em]
\resizebox{\columnwidth}{!}{
$\begin{aligned}
\trace_{c_1} &= \submitapplication, \wrongtask{\manualcheck}, \wrongtask{\automaticcheck}, \informphone, \accept, \createoffer, \informphone, \senddocument \\
\trace_{c_2} &= \submitapplication, \automaticcheck, \manualcheck, \informphone, \accept, \createoffer, \informphone, \senddocument  \\
\trace_{c_3} &= \submitapplication, \automaticcheck, \wrongtask{\informphone}, \accept, \createoffer, \wrongtask{\informsms}, \senddocument \\
\trace_{c_4} &= \submitapplication, \automaticcheck, \informphone, \informphone, \accept, \createoffer, \informphone, \senddocument, \askappointment \\
\trace_{c_5} &= \submitapplication, \automaticcheck, \manualcheck, \wrongtask{\informphone}, \accept, \createoffer, \wrongtask{\informemail}, \senddocument
\end{aligned}$
}\\[.5em]
We assume that counterfactual explanations $\trace_{c_1}, \ldots, \trace_{c_5}$ are properly built using available activities in the application and negotiation process. Nonetheless, some of them must be ruled out when considering how the agency operates. In our example, the agency has operational rules that state that: (i) within a negotiation, an automated check must eventually be conducted, and in case also a manual check is done, it can only be done after the automated one; and (ii) the agency always informs the applicant with the same communication mode (i.e., it is not possible to have, in the same negotiation, communications done with distinct modes). These two rules can be expressed using \LTLp (recalled next) as follows: \\[1em]
\resizebox{\columnwidth}{!}{
$\begin{aligned}
\formulacheck &= (\neg \manualcheck) \until \automaticcheck \\
\formulacomm &=  \neg((\eventually~\informphone) \land (\eventually~\informsms))
  \land \neg((\eventually~\informphone) \land (\eventually~\informemail)) \land \\
  & \qquad \neg((\eventually~\informsms) \land (\eventually~\informemail))
\end{aligned}
$
}\\[.5em]
It turns out that, among $\trace_{c_1}, \ldots, \trace_{c_5}$, $\trace_{c_1}$ violates $\formulacheck$, while $\trace_{c_3}$ and $\trace_{c_5}$ violate $\formulacomm$. The activities causing the violations are underlined in $\trace_{c_1}$, $\trace_{c_3}$ and $\trace_{c_5}$.

All in all, dealing with the presented scenario challenges state-of-the-art counterfactual generation approaches. In fact, when these approaches are applied in an out-of-the-box way, they cannot incorporate the background knowledge captured in $\formulacheck$ and $\formulacomm$. What we need is a technique that only considers, as counterfactual traces, those that ensure to respect the temporal background.



\section{Background}
\label{sec:background}
We overview here the essential background for the paper.

\subsection{Linear Temporal Logic on Process Traces}
\label{sec:LTLf}

Linear temporal logics, that is, temporal logics predicating on traces, are the most natural choice to express background knowledge in our setting. Traditionally, traces are assumed to have an infinite length, as witnessed by the main representative of this family of logics, namely \LTL~\cite{Pnue77}. 
In several application domains, such as those mentioned in \S\ref{sec:intro} and \S\ref{sec:motexample}, the dynamics of the system are more naturally captured using unbounded, but \emph{finite}, traces \cite{DeDM14}. This led to \emph{\LTL over finite traces} (\LTLf) \cite{DeGV13}, which adopts the syntax of \LTL but interprets formulae over finite traces. In our work, we are specifically interested in a variant of \LTLf where propositions denote atomic activities constituting the basic building blocks of a process, and where each state indicates which atomic activity has been executed therein. 
This logic has been termed \emph{\LTL on process traces} (\LTLp) in \cite{FioG18}, which we follow next.

Fix a finite set \tasks of activities. A \emph{(process) trace} \trace over \tasks is a finite, non-empty sequence $\atask_1,\ldots,\atask_{n}$ over \tasks, indicating which activity from $\Sigma$ is executed in every instant $i \in \set{1,\ldots,n}$ of the trace. The length $n$ of $\trace$ is denoted $\length(\trace)$. For $i<\length(\trace)$, $\trace(i)$ denotes the activity $\atask_i$ executed at instant $i$ of $\trace$, and $\trace(\colon i)$ the prefix $a_1,\ldots,a_i$ of $\trace$.

An \LTLp formula $\varphi$ is defined according to the grammar
{
\begin{equation*}
\begin{aligned}
\varphi & ::=  \atask \mid \lnot \varphi \mid \varphi_1 \lor \varphi_2 \mid \tomorrow \varphi \mid \varphi_1 \until \varphi_2, \text{ where $\atask \in \Sigma$.}
\end{aligned}
\end{equation*}
}
Intuitively, $\tomorrow$ is the \emph{(strong) next} operator: $\tomorrow \varphi$ indicates that the next instant must be within the trace, and $\varphi$ is true therein. $\until$ is the \emph{until} operator: $\varphi_1 \until \varphi_2$ indicates that $\varphi_2$ is true now or in a later instant $j$ of the trace, and in every instant between the current one and $j$ excluded, $\varphi_1$ is true. Formally, given an \LTLp formula $\varphi$, a process trace $\trace$, and an instance $i \in \set{1,\ldots,\length(\trace)}$, we inductively define that $\varphi$ is true in instant $i$ of $\trace$, written $\trace,i\models \varphi$, as:

{\fontsize{9pt}{9pt}\selectfont
\begin{align*}
\trace,i & \models  \atask && \text{if } \trace(i) = \atask\\
 \trace,i & \models  \lnot \varphi && \text{if } \trace,i \not \models \varphi\\
\trace,i & \models \varphi_1 \lor \varphi_2 && \text{if } \trace,i \models \varphi_1 \text{ or }  \trace,i \models \varphi_2\\
\trace,i & \models \tomorrow \varphi &&
\text{if } i+1 \leq \length(\trace) \text{ and } \trace,i+1 \models \varphi \\
\trace,i & \models \varphi_1 \until \varphi_2 
&&
\text{if } \trace,j \models \varphi_2 \text{ for some } j \text{ s.t.~} i \leq j \leq \length(\trace) \\
&&&
\text{and } \trace,k \models \varphi_1 \text{ for every } k \text{ s.t.~} i \leq k < j
\end{align*}}
We say that $\trace$ satisfies $\varphi$, written $\trace \models \varphi$, if $\trace,1 \models \varphi$. 

The other boolean connectives $\true$, $\false$, $\land$, $\limp$ are derived as usual, noting $\true = \bigvee_{\atask_i \in \tasks} \atask_i$. Further key temporal operators are derived from $\tomorrow$ and $\until$. E.g.,
$\eventually \varphi = \true \until \varphi$ states that $\varphi$ is \emph{eventually} true in some future instant.

 
Since every \LTLp formula is an \LTLf formula, the automata-theoretic machinery defined for \LTLf \cite{DeGV13,DDMM22} applies to \LTLp as well. Specifically, we recall the definition of a deterministic finite-state automaton (\DFA) over process traces: a standard \DFA with the only difference that, due to how process traces are defined, it employs $\tasks$ instead of  $2^\tasks$ in labelling its transitions. a \DFA over process traces from $\tasks$ is a tuple $\aut = \tup{\tasks,\states,\istate,\trans,\fstates}$, where:
\begin{inparaenum}[\itshape (i)]
\item $\states$ is a finite set of states;
\item $\istate \in \states$ is the initial state;
\item $\trans: \states \times \tasks \rightarrow \states$ is the ($\tasks$-labelled) transition function;
\item $\fstates \subseteq \states$ is the set of final states.
\end{inparaenum}
A process trace $\trace = \atask_1,\ldots,\atask_n$ is accepted by $\aut$ if there is a sequence of $n+1$ states $\istate, \ldots, \state_{n}$ such that:
\begin{inparaenum}[\itshape (i)]
\item the sequence starts from the initial state $\istate$ of $\aut$;
\item the sequence culminates in a last state, that is, $\state_{n} \in \fstates$;
\item for every $i \in \set{1,\ldots,n}$, we have $\trans(q_{i-1},\atask_i) = q_{i}$. 
\end{inparaenum}
%
Notably, every \LTLp formula $\varphi$ can be encoded into a \DFA over process traces $\aut_\varphi$ that recognises all and only those traces that satisfy  $\varphi$: for every process trace $\trace$ over $\tasks$, we have that $\trace$ is accepted by $\aut_\varphi$ if and only if $\trace \models \varphi$.

Given an \LTLp formula $\varphi$, we denote by $\activetasks{\varphi}$ the set of activities mentioned in $\varphi$, and by $\othertasks{\varphi}$ the set of \emph{other} activities, that is, activities not mentioned in $\varphi$:  $\othertasks{\varphi} = \tasks \setminus \activetasks{\varphi}$.

%
\btext{Figure~\ref{fig:dfa} shows the \DFA of \LTLp formula $\formulacheck$ from \S\ref{sec:motexample}, which states that during negotiations, an \automaticcheck must be eventually done, and \manualcheck, if done, can only follow the automated one.}

\begin{figure}[t]
\centering
\resizebox{.6\columnwidth}{!}{
\begin{tikzpicture} [->,>=latex,auto,node distance=1cm and 2cm, very thick]

\node[istate,accepting] (s0) {$\state_0$};
\node[right = of s0] (mid) {};
\node[state, accepting, right = of mid] (s1) {$\state_1$};
\node[state, below = of mid] (s2) {$\state_2$};

\path
(s0) edge [loop below] node [left,xshift=-2mm] {$\othertasks{\varphi_{\mathit{chk}}}$} (s0)
(s0) edge node {$\automaticcheck$} (s1)
(s1) edge [loop right] node [right, align=center] {$\othertasks{\varphi_{\mathit{chk}}}$} (s1)
(s1) edge [loop, out=-30, in=-60, looseness=8] node [right] {$\automaticcheck$} (s1)
(s1) edge [loop, in = -120, out = -150, looseness=8] node [below, align=center] {$\manualcheck$} (s1)
(s0) edge node[sloped] {$\manualcheck$} (s2)
(s2) edge [loop left] node [left, align=center] {$\automaticcheck$} (s2)
(s2) edge [loop above] node [right,xshift=1mm, align=center,font=\small] {$\othertasks{\varphi_{\mathit{chk}}}$} (s2)
(s2) edge [loop right] node [right, align=center] {$\manualcheck$} (s2)
;

%
%
%
%
%
\end{tikzpicture}
}
\caption{Graphical representation of the \DFA for the \LTLp formula $\formulacheck = (\neg \manualcheck) \until \automaticcheck$. Each transition labelled $\othertasks{\varphi_{\mathit{chk}}}$ is a placeholder for the set of transitions connecting the same pair of states, one per activity in the set $\othertasks{\varphi_{\mathit{chk}}}$ (i.e., different from $\manualcheck$ and $\automaticcheck$).}
\label{fig:dfa}
\end{figure}

\subsection{Counterfactual Explanations}
\label{sec:cfs}
 Differently from other XAI methods,  
counterfactual explanations do not attempt to explain the inner logic of a predictive model, but rather offer alternatives
to the user to obtain a desired prediction~\cite{wachter2017counterfactual}.

Given a black-box classifier $h_{\theta}$, a counterfactual $\trace_c$ of $\trace$ is a sample for which the prediction of the model is different from the one of $\trace$ (i.e., $h_{\theta}(\trace_c) \neq h_{\theta}(\trace)$)\btext{, such that the difference between $\trace$ and $\trace_c$ is \emph{minimal}~\cite{guidotti2022counterfactual}}. 
A counterfactual explainer is a function $F_t$, where $t$ is the number of requested counterfactuals, such that, for a given sample $\trace$, a black box model $h_{\theta}$, and the set $\mathcal{T}$ of samples used to train the black-box model, returns a set $\mathcal{C} = \{\trace_{c_1}, \ldots, \trace_{c_h}\} $ (with $h \leq t$).
For instance, from the running example, with $t = 5$, $y = \textbf{Fail}$, $y' = \textbf{Accept}$, and $\trace = \trace_1$, running $F_t(\trace, h_{\theta}, \mathcal{T})$ yields the counterfactuals $\trace_{c_1}, \ldots, \trace_{c_5}$.

The XAI literature outlines several desiderata for counterfactual explanations~\cite{verma2020counterfactual}:
(i) \emph{Validity:} Counterfactuals should 
flip the original prediction, aligning with the desired class.
(ii) \emph{Input Closeness:} Counterfactuals should minimize changes for clearer explanations.
(iii) \emph{Sparsity:} Counterfactuals should alter as few attributes as possible for conciseness.
(iv) \emph{Plausibility:} Counterfactuals must adhere to observed feature correlations, ensuring feasibility and realism.
(v) \emph{Diversity:} A set of counterfactuals should provide diverse alternatives for the user.

The \emph{validity} of a counterfactual $\trace_c$ (desideratum (i)) is measured by the function \emph{val}, which evaluates the difference between the predicted value $h_\theta(\trace_c)$ and the desired class $y'$:
\btext{\begin{align}
\label{eq:validity}
\text{val}(h_{\theta}(\trace_c), y') = I_{h_{\theta}(\trace_c) \neq y'}
\end{align}}
\massi{where $I$ is the indicator function.}\footnote{\label{ft:indicator}The indicator function $I_{x\neq y}$ is 1 when $x\neq y$ and 0 otherwise.}


\emph{Input closeness} of the $\trace_c$ to $\trace$ (desideratum (ii)), measured by the \emph{dist} function, assesses their dissimilarity:
\begin{align}
\label{eq:dist}
\text{dist}(\trace, \trace_c) = \frac{1}{\length(\trace)} \sum_{i=1}^{\length(\trace)} d(\trace(i), \trace_{c}(i)) 
\end{align}
where $d$ is a properly defined distance in the feature space. In this work, for the distance $d$ between trace elements, we use the indicator function $d(x,y)=I_{x\neq y}$.

The \emph{sparsity} of $\trace_c$ regarding $\trace$ (desideratum (iii)) is measured through the $\textit{spars}$ function, which counts the number of changes in the counterfactual:
\begin{equation}
\label{eq:spars}
{
\text{spars}(\trace, \trace_c) = ||\trace - \trace_c||_0 = \sum_{i=1}^{\length(\trace)} I_{\trace(i) \neq \trace_{c}(i)}}
\end{equation}

\emph{Implausibility} (desideratum (iv)) of $\trace_c$ is measured by the \emph{implaus} function, calculating the distance between $\trace_c$ and the closest sample $\trace_z$ in the reference population $\mathcal{T}$:
\begin{equation}
\label{eq:implaus}
{
\text{implaus}(\trace_c,\mathcal{T}) =  \min_{\trace_z \in \mathcal{T}} \frac{1}{\length(\trace)} \sum_{i=0}^{\length(\trace)} d(\trace_{z}(i), \trace_{c}(i))}
\end{equation}
where $d$ is the same distance used in~\eqref{eq:dist}.


The fifth desideratum (v), \emph{diversity}, measures the pairwise distances between the counterfactuals in $\mathcal{C}$, using the $\textit{dive}$: 
\begin{equation}
\label{eq:dive}
{
\text{dive}(\mathcal{C}) = \frac{1}{|\mathcal{C}| (|\mathcal{C}| - 1)} \sum_{\{\trace_{c},\trace_{c'}\}} \text{dist}(\trace_{c},\trace_{c'})}
\end{equation}
where \emph{dist} is defined in \eqref{eq:dist} and the sum is over all possible unordered pairs $\{\trace_{c},\trace_{c'}\}$ of elements $\trace_{c},\trace_{c'}$ of $\mathcal{C}$.

\subsection{Genetic Algorithms}
\label{sec:GA}
Genetic Algorithms (GAs) 
are powerful optimisation techniques, inspired from the natural processes of evolution, widely used for complex problems due to their effectiveness~\cite{geneticbook}.
GAs work with a population of solutions $\mathcal{P}$, evaluating their quality through a \textit{fitness function}. We will present the main components of GAs through the prism of traces and counterfactual explanations.
Each candidate solution in the search space, such as each possible trace, is described by a set of genes, forming its \textit{chromosome} or \textit{genotype}. Below, we outline the main components of a GA.

The \textit{chromosome} $\trace_c$ of a candidate solution is a sequence of genes, where $\trace_c(i)$ refers to the $i$-th gene,
$i \in \set{1,\ldots,n}$ where $n$ is the length of the chromosome. 
In the case of traces $\trace_c(i)$ represents the $i$-th executed activity.


The \textit{fitness function} $f(\trace_c)$ evaluates the quality of each candidate solution $\trace_{c} \in \mathcal{P}$, providing a measure 
that is used as the objective for the GA optimization.
For example, in counterfactual generation, the fitness function instantiates the different objectives introduced in Sect.~\ref{sec:cfs}.


The \textit{initial population} $\mathcal{P}$ is a set of candidate traces generated at the beginning of the GA process. It may consist of randomly generated or predefined candidates:
$
\mathcal{P} = \{\trace_{c_1}, \trace_{c_2}, \ldots, \trace_{c_p}\}
$
where $\trace_{c_k}$ denotes an individual candidate.
Afterwards, a subset of the population $\mathcal{P}$ is selected based on the fitness score. 
Selected chromosomes (parents) undergo \emph{crossover} to produce offsprings. This operation combines parts of two parents' chromosomes to create new ones, promoting genetic diversity. If $\trace_{p_1}$ 
and $\trace_{p_2}$ 
are two parents, 
then each component of the offspring  $\trace_{o}$ is selected from either of the two, i.e.~$\trace_{o}(i)=\trace_{p_1}(i)$ or $\trace_{p_2}(i)$.


Offspring chromosomes are subject to \emph{mutation}, which involves randomly altering one or more genes. For example, a gene $\trace_{o}(i)$ might change with some small probability $p_{\text{mut}}$, introducing new genetic material and helping to prevent premature convergence.
Specifically, $\trace_{o}'(i)$ is mutated to a random gene with probability $p_\text{mut}$ and remains equal to $\trace_{o}(i)$ otherwise.
Once this offspring undergoes mutation ($\trace_{o}'$), it becomes part of the new population as $\trace_{c_k}$.

In our example, $\mathcal{P}$ consists of counterfactual candidates $\mathcal{P} = \{\trace_{c_1}, \ldots, \trace_{c_5}\}$. Each candidate's genotype, e.g., $\trace_{c_1}$, is an activity sequence, with each activity as a gene (e.g., $\trace_{c_1}(1) = \submitapplication$). The fitness function evaluates candidates based on factors like the ability to flip the outcome, where the fittest are selected for crossover and mutation.
Ensuring compliance with constraints, like maintaining the correct sequence of activities (\automaticcheck before \manualcheck), is crucial to generating compliant counterfactuals, such as $c_2, c_4$. 



\section{Approach}
\label{sec:approach}
We are now ready to introduce our framework for generating counterfactual explanations that comply with background knowledge described through \LTLp formulae.

The first step of is to define a new \emph{compliance} desideratum (desideratum vi). This is done by using a \(\text{compliance}\) function measuring whether the counterfactual $\trace_c$ satisfies a \LTLp formula $\varphi$ representing the temporal background knowledge. A counterfactual $\trace_c$ is deemed as a \emph{temporally constrained counterfactual} if it satisfies $\varphi$. Formally:
\begin{equation}
\label{eq:conf}
\text{compliance}(\trace_c, \varphi) =
\begin{cases}
1 &\text{if } \trace_c \models \varphi\\
0 &\text{otherwise}.
\end{cases}
\end{equation}

Recalling $\varphi = \formulacheck \land \formulacomm$ and counterfactuals $\trace_{c_1}, \ldots, \trace_{c_5}$ from \S\ref{sec:motexample}, we have $\text{compliance}(\tau_c,\varphi)=1$ for $c \in \{c_2,c_4\}$ and  $\text{compliance}(\tau_c, \varphi) = 0$ for $c \in \{c_1,c_3,c_5\}$.

Next we formulate the fitness function and its afferent objectives (\S\ref{sec:problemformulation}), and we introduce the modified crossover and mutation operators for the generation of counterfactuals that guarantee the compliance to \LTLp formulae (\S\ref{sec:crossover_mutation}).

\subsection{Optimisation Problem Formulations}
\label{sec:problemformulation}

We follow GA-based methods like~\cite{geco,moc} and instantiate the first four desiderata from \S\ref{sec:cfs} into corresponding optimisation objectives for the fitness function, including the \emph{compliance} desideratum.\footnote{In GAs, diversity is managed through selection, crossover, and mutation operators, rather than the fitness function.}

Hence, the objectives to optimise are:
\emph{validity} of the counterfactual $\trace_c$ \eqref{eq:validity}; the \dist of $\trace_c$ to the original trace $\tau$ \eqref{eq:dist}; \spars, quantifying the number of changes in $\trace_c$ \eqref{eq:spars} from $\trace$;
\impl, that corresponds to the distance of $\trace_c$ from the reference population $\mathcal{T}$ \eqref{eq:spars}; and \emph{compliance}, measuring whether the $\trace_c$ is compliant to $\varphi$ or not \eqref{eq:conf}.
The resulting fitness function $\mathbf{f}$ is thus defined as:
\begin{equation}
\label{eq:sobjadapted}
\begin{split}
\mathbf{f} = &
\text{val}(h_{\theta}(\trace_c),y') + \alpha \ \text{dist}(\trace,\trace_c)
+ \beta \ \text{spars}(\trace,\trace_c) +
\\ & \gamma \  \text{implaus}(\trace_c,\mathcal{T})
+ \delta \ \text{compliance}(\trace_c, \varphi).
\end{split}
\end{equation}
where $\alpha, \beta, \gamma, \delta$ are weighting factors controlling the influence of each term on the overall fitness.

\subsection{Temporal Knowledge-Aware Operators}
\label{sec:crossover_mutation}
To guarantee the satisfaction of the background knowledge in the form of $\varphi$, we modify the GA, specifically the crossover and mutation operators introduced in \S\ref{sec:GA}. 

\paragraph{Temporal Knowledge-aware Crossover} 
Given an original query instance (process trace) $\trace$ satisfying \LTLp formula $\varphi$, and two parent traces $\trace_{p_1}$ and $\trace_{p_2}$ in the current population $\mathcal{P}$, the Temporal Knowledge-aware Crossover operator presented in Algorithm~\ref{alg:crossoveroperator} generates an offspring individual $\trace_o$ that satisfies $\varphi$. It takes as input also the crossover probability $p_\text{c}$ and the alphabet $\tasks_\varphi$ of the activities mentioned in $\varphi$.
%
The Temporal Knowledge-aware Crossover operator initiates an offspring individual $\trace_o$ by retaining from 
$\trace$ the 
phenotype that actively interacts with $\varphi$, guaranteeing its satisfaction. This is formed by those activities in 
$\tasks_\varphi$ (lines \ref{algo1:initialization_starts}--\ref{algo1:initialization_ends}).
The empty genes 
in the offspring individual chromosome are then filled with one of the two parents' genetic material,
but only if the corresponding  parent's gene 
does not interact with $\varphi$, i.e., contains an activity in from $\othertasks{\varphi}$ (lines \ref{algo1:crossover_starts}--\ref{algo1:crossover_ends}).
In detail, a random probability $p$ is sampled (line \ref{algo1:sampling_p}) for every empty gene (line \ref{algo1:empty_features}). The genetic material is then chosen, as in classical crossover operators, from either parent $\trace_{p_1}$ or $\trace_{p_2}$ (lines \ref{algo1:chooseP1}--\ref{algo1:chooseP2}), according to the given crossover probability $p_\text{c}$, if the parent's activity does not belong to $\tasks_\varphi$.
Otherwise, if both parents' activities at that gene 
belong to $\tasks_\varphi$, the crossover operator uses the gene from the original query instance $\trace$ (line \ref{algo1:keepQ}). As we prove later, this ensures that $\trace_o$ alters $\trace$ in a way that does not affect the satisfaction of $\varphi$.

\begin{algorithm}[t]
\caption{{\small Temporal Knowledge-aware Crossover operation}}
\label{alg:crossoveroperator}
\begin{algorithmic}[1]
\State \textbf{Input:} parent individuals $\trace_{p_1}$ and $\trace_{p_2}$, crossover probability $p_\text{c}$, original query instance $\trace$, activities $\tasks_\varphi$
\State \textbf{Output:} offspring trace $\trace_{o}$

\For{$i$ from $1$ to $|\trace|$} \label{algo1:initialization_starts}
    \If{$\trace(i) \in \tasks_\varphi$}
        $\trace_{o}(i) \gets \trace(i)$
    \Else
         ~$\trace_{o}(i) \gets \textit{null}$
    \EndIf
\EndFor \label{algo1:initialization_ends}

\For{$i$ from $1$ to $|\trace_o|$} \label{algo1:crossover_starts}
    \State $p \sim U(0, 1)$ \label{algo1:sampling_p}
    \If{$\trace_{o}(i)$ is \textit{null}} \label{algo1:empty_features}
        \If{$p < p_\text{c} \wedge {\trace_{p_1}(i)} \notin \Sigma_\varphi$}
            $\trace_{o}(i) \gets \trace_{p_1}(i)$ \label{algo1:chooseP1}
        \ElsIf{$p \ge p_\text{c} \wedge {\trace_{p_2}(i)} \notin \Sigma_\varphi$}
            $\trace_{o}(i) \gets {\trace_{p_2}(i)}$ \label{algo1:chooseP2}
        \Else
            ~$\trace_{o}(i) \gets \trace(i)$ \label{algo1:keepQ}
        \EndIf
    \EndIf
\EndFor \label{algo1:crossover_ends}

\State \textbf{return} $\trace_{o}$
\end{algorithmic}
\end{algorithm}



\paragraph{Temporal Knowledge-aware Mutation}
We constrain the mutation operator, designed to increase the  diversity of the population, with two strategies that maintain the diversity in the generated offsprings while ensuring that $\varphi$ is satisfied.

The first strategy, called \firstheur, computes all the possible mutations for each gene $\trace_{o}(i)$
at the beginning of the mutation phase. The second strategy, called \secondheur, exploits \DFA $\aut_\varphi$ to compute the possible mutations for the current gene $\trace_{o}(i)$ in the construction of $\trace_{o}$ taking into account the  already constructed partial trace $\trace_{o}(\colon i-1)$.


We indicate with $\mathcal{D}_i$ the set of all possible activities that can occur at the $i$-th gene in any generated counterfactual, and define it as the set of all activities occurring in position $i$ in all historical traces $\mathcal{T}$. Formally, $D_i = \{\trace(i) \mid \trace \in \mathcal{T}\}$.
Given an offspring $\trace_{o}$, %
the \firstheur strategy produces an offspring $\trace'_{o}$ by mutating only genes that are not 
in $\Sigma_\varphi$ with values that are not 
in  $\Sigma_\varphi$, thus mapping \emph{other} activities into \emph{other} activities, which interact with $\varphi$ interchangeably. 
The \secondheur strategy instead produces an offspring $\trace'_{o}$ by computing, for every 
partial trace $\trace_o(\colon i)$ with $1 \leq i \leq |\trace_{o}|$, which activities could be used in place of the last activity $\trace_o(i)$ to alter such position $i$ without changing the satisfaction of $\varphi$. This is realized through the \textsc{Safeact} function defined in Algorithm~\ref{alg:safemoves}, exploiting the \DFA $\aut_\varphi$ as follows. First $\aut_\varphi$ is traversed using the sequence of activities in $\trace_o(\colon i-1)$, leading (deterministically) to a state $q$ of the \DFA. Then it is checked which next state $q'$ is obtained by applying transition $\trans(q,\trace_o(i))$. The safe activities that can be used in place of $\trace_o(i)$ in state $q$ are then those that lead to the same next state $q'$. In a sense, this generalises \firstheur, as in some states also activities from $\activetasks{\varphi}$ may be interchangeable.

\begin{algorithm}[t]
\caption{Compute Safe Activities}
\label{alg:safemoves}
\begin{algorithmic}[1]
\Function{Safeact}{$\trace_o, i, A$}
    \State $\Sigma^{\text{safe}} \gets \{\}$
    
    \State $q \gets$ initial state of $A$
    
    \For{$j$ from $1$ to $i-1$}
        $q \gets \delta(q, \trace_o(j))$
   \EndFor

    \State $q' \gets \delta(q, \trace_o(i))$
    
    \For{$\atask \in \tasks$}
        \If{$\delta(q, \atask) = q'$}
         $\Sigma^{\text{safe}} \gets \Sigma^{\text{safe}} \cup \{\atask\}$
        \EndIf
    \EndFor
    
    \State \textbf{return} $\tasks^{\text{safe}}$
\EndFunction
\end{algorithmic}
\end{algorithm}

%
\begin{algorithm}[tb]
\caption{{\small Temporal Knowledge-aware Mutation operator}}
\label{alg:mutation_operator}
\begin{algorithmic}[1] 
\State \textbf{Input:} offspring $\trace_o$, mutation probability $p_\text{mut}$, \LTLp formula $\varphi$, domains of each gene $D$,
 mutation strategy $S$
\State \textbf{Output:} mutated offspring $\trace_o'$
\For{$i$ from $1$ to $|\trace_o|$} \label{algo3:For_starts}
    \State $p \sim U(0, 1)$ \label{algo3:sample_p}
    \If{$p < p_\text{mut}$} \label{algo3:pm_threshold}
        \If{$S$ is \firstheur  \textbf{ and } $\trace_{o}(i) \notin \tasks_\varphi$} \label{algo3:strategy_starts}
            \State $\trace_{o}(i) \sim U(
            \mathcal{D}_{i} 
            \setminus \Sigma_\varphi)$
        \ElsIf{$S$ is \secondheur}
            \State $\Sigma^{\text{safe}} \gets \textsc{SafeAct}(\trace_o, i, \aut_\varphi) $
            \State $\trace_{o}(i) \sim U(\D_i \cap \Sigma^{\text{safe}})$
        
        \EndIf \label{algo3:strategy_ends}

    \Else \label{algo3:NoMut}
        ~$\trace_{o}(i) \gets \trace_{o}(i)$
    \EndIf
\EndFor \label{algo3:For_ends}
\State $\trace'_o \gets \trace_o $
\State \textbf{return} $\trace'_o$
\end{algorithmic}
\end{algorithm}

The Temporal Knowledge-aware Mutation operator, presented in Algorithm~\ref{alg:mutation_operator}, focuses on mutating an offspring individual $\trace_o$ while preserving the satisfaction of the formula $\varphi$. 
The operator takes as input the offspring $\trace_o$, the mutation probability $p_\text{mut}$, the set of the domains of the genes $D=\{ \mathcal{D}_i \mid i \in \{1, \dots, |\trace_o| \} \}$, 
and the chosen mutation strategy $S$,
returning the mutated offspring $\trace'_o$ as output. It also takes as input the \LTLp formula $\varphi$ capturing background knowledge, for which we assume that the \DFA $\aut_\varphi$ has been pre-computed (and hence is passed as implicit parameter to the algorithm).
%
The algorithm starts by sampling a random mutation probability $p_\text{mut}$  (line \ref{algo3:sample_p}), and then iterates through each gene from 1 to $|\trace_o|$ (lines \ref{algo3:For_starts}--\ref{algo3:For_ends}).
The mutation is carried on if the sampled probability $p_\text{mut}$ is under the set threshold probability for mutation $p_\text{mut}$ (line \ref{algo3:pm_threshold}), otherwise, we return the value of $\trace_{o}(i)$ (line~\ref{algo3:NoMut}). 
In the case of mutation, the value of $\trace_o(i)$ is then uniformly sampled according to the selected mutation strategy $S$ 
(lines \ref{algo3:strategy_starts}--\ref{algo3:strategy_ends})
, mutating each gene accordingly, and returning the mutated offspring $\trace'_o$. 

\paragraph{Correctness of the approach.}
All in all, the application of the crossover operator from Algorithm~\ref{alg:crossoveroperator}, as well as that of the mutation operator with the two illustrated strategies from Algorithm~\ref{alg:mutation_operator}, are correct in terms of how they interact with temporal background knowledge, in the following sense.

\begin{restatable}{thm}{thmalgoone}
\label{thm:crossover}
Let $\varphi$ be a \LTLp formula, and $\trace_{p_1}$, $\trace_{p_2}$, and $\trace$ be process traces over $\tasks$, with $\trace \models \varphi$. Let $p_\text{c}\in\mathbb{R}_{[0,1]}$. Assume that Algorithm~\ref{alg:crossoveroperator} is invoked by passing $\trace_{p_1}$, $\trace_{p_2}$, $p_\text{c}$, $\trace$, and $\tasks_\varphi$ as input, and that it returns $\trace_o$. Then $\trace_o \models \varphi$.
\end{restatable}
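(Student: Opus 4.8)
The plan is to show that the offspring $\trace_o$ produced by Algorithm~\ref{alg:crossoveroperator} agrees with $\trace$ on every position carrying an activity from $\activetasks{\varphi}$, and that at all other positions it carries an activity from $\othertasks{\varphi}$; and then to argue that such "$\activetasks{\varphi}$-preserving" modifications never affect satisfaction of $\varphi$. First I would make precise the following claim about the algorithm's output: for every $i \in \set{1,\ldots,\length(\trace)}$, (a) if $\trace(i) \in \activetasks{\varphi}$ then $\trace_o(i) = \trace(i)$, and (b) if $\trace(i) \in \othertasks{\varphi}$ then $\trace_o(i) \in \othertasks{\varphi}$. For (a): in the initialization loop (lines~\ref{algo1:initialization_starts}--\ref{algo1:initialization_ends}), whenever $\trace(i) \in \tasks_\varphi$ the gene is set to $\trace(i)$ and is no longer $\textit{null}$, so the second loop's guard "$\trace_o(i)$ is \textit{null}" fails and the value is never overwritten. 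For (b): if $\trace(i) \in \othertasks{\varphi}$, the gene is initialized to $\textit{null}$; in the second loop it is overwritten either with $\trace_{p_1}(i)$ or $\trace_{p_2}(i)$ — but only when the guard explicitly requires that parent's activity to lie in $\othertasks{\varphi}$ (the conditions ${\trace_{p_1}(i)} \notin \Sigma_\varphi$, ${\trace_{p_2}(i)} \notin \Sigma_\varphi$) — or else with $\trace(i)$ itself (line~\ref{algo1:keepQ}), which lies in $\othertasks{\varphi}$ by assumption. In all cases $\trace_o(i) \in \othertasks{\varphi}$, and note $\length(\trace_o) = \length(\trace)$ since the offspring chromosome has the same index range.

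The second ingredient is a semantic lemma: if $\trace$ and $\trace'$ are process traces of the same length such that $\trace(i) = \trace'(i)$ whenever $\trace(i) \in \activetasks{\varphi}$, and $\trace'(i) \in \othertasks{\varphi}$ whenever $\trace(i) \in \othertasks{\varphi}$, then for every subformula $\psi$ of $\varphi$ and every instant $i$, $\trace,i \models \psi$ iff $\trace',i \models \psi$. This goes by structural induction on $\psi$. The only interesting base case is $\psi = \atask$ with $\atask \in \activetasks{\varphi}$: then $\trace,i\models\atask$ means $\trace(i)=\atask\in\activetasks{\varphi}$, hence $\trace'(i)=\trace(i)=\atask$, and conversely if $\trace'(i)=\atask\in\activetasks{\varphi}$ then $\trace(i)\notin\othertasks{\varphi}$ (else $\trace'(i)\in\othertasks{\varphi}$), so $\trace(i)\in\activetasks{\varphi}$ and thus $\trace(i)=\trace'(i)=\atask$. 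The Boolean and temporal cases ($\lnot$, $\lor$, $\tomorrow$, $\until$) are immediate from the induction hypothesis since both traces have the same length, so the quantification over future instants ranges over the same set. Applying this lemma with $\psi = \varphi$ and $i = 1$, together with $\trace \models \varphi$, yields $\trace_o \models \varphi$.

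I expect the main subtlety — though not a deep obstacle — to be stating the invariant about the algorithm correctly: one must observe that the two loops interact through the $\textit{null}$ marker, so that positions fixed from $\trace$ in the first loop are exactly the $\activetasks{\varphi}$-positions and are immune to the second loop, while the semantic lemma must be phrased with the asymmetric hypothesis (equality on $\activetasks{\varphi}$-positions, mere membership in $\othertasks{\varphi}$ on the others) rather than full trace equality, since crossover genuinely changes the "other" activities. An alternative, automata-flavoured proof would run $\aut_\varphi$ in parallel on $\trace$ and $\trace_o$ and show the runs visit the same states, using that transitions labelled by distinct activities of $\othertasks{\varphi}$ lead to the same successor (as in Figure~\ref{fig:dfa}); but this requires formalizing that property of $\aut_\varphi$, so the direct semantic induction above is cleaner.
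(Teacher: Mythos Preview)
Your proposal is correct and follows essentially the same approach as the paper: both first extract the same algorithmic invariant (equal length; $\trace_o(i)=\trace(i)$ on $\activetasks{\varphi}$-positions; $\trace_o(i)\in\othertasks{\varphi}$ on the remaining positions) and then argue that satisfaction of $\varphi$ is preserved under such changes. The only cosmetic difference is that the paper phrases the semantic step as a proof by contradiction (``a violation would force a mismatch on some atomic subformula at some instant''), whereas you spell out the underlying structural induction on subformulas explicitly --- your version is slightly more rigorous, but the argument is the same.
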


\begin{proof}
\btext{Let $n = \length(\trace)$.
Upon inspection of Algorithm~\ref{alg:crossoveroperator}, one can see that every output $\trace_o$ produced by the algorithm  relates to the input trace $\trace$ as follows:
\begin{compactenum}
\item $\length(\trace_o) = \length(\trace) = n$;
\item for every $i \in \set{1,\ldots,n}$:
\begin{compactenum}
\item if $\trace(i) \in \activetasks{\varphi}$ then $\trace_o(i) = \trace(i)$;
\item if instead $\trace \not \in \activetasks{\varphi}$, that is, $\trace  \in \othertasks{\varphi}$, then $\trace_o(i) \in \othertasks{\varphi}$ as well -- equivalently, $\trace_o(i) \in \othertasks{\varphi}$ if and only if $\trace(i) \in \othertasks{\varphi}$.
\end{compactenum}
\end{compactenum}}

\btext{By absurdum, imagine that $\trace_o \not \models \varphi$. Since, by property (1) above, $\length(\trace_o) = \length(\trace)$, this means that the violation must occur due to a mismatch in the evaluation of an atomic formula in some instant. Technically, there must exist $i \in \set{1,\ldots,n}$ and an atomic sub-formula $a \in \Sigma$ of $\varphi$ (which, by definition, requires $a \in \activetasks{\varphi}$) such that 
either:
\begin{compactenum}[(A)]
\item $\trace,i \models a$ and $\trace_o,i \not \models a$, or 
\item $\trace,i \not\models a$ and $\trace_o,i \models a$. 
\end{compactenum}}

\btext{\noindent \emph{Case (A).} By the \LTLp semantics, we have $\trace(i) = a$ and $\trace_o(i) \neq a$. However, this is impossible: since $a$ belongs to $\activetasks{\varphi}$, then by property (2a) above, we have $\trace(i) = \trace_o(i)$.}

\btext{\noindent \emph{Case (B).} By the \LTLp semantics, $\trace,i \not \models a$ if and only if $\trace(i) = b$ for some $b \in \Sigma \setminus \set{a}$. There are two sub-cases: either $b \in \activetasks{\varphi}\setminus \set{a}$, or $b \in \othertasks{\varphi}$. In the first sub-case, impossibility follows again from the fact that, by property (2a) above, since $b \in \activetasks{\varphi}$, then $\trace_o(i) = \trace(i) =b$, which implies $\trace_o,i \not\models a$. In the second sub-case, impossibility follows from the fact that $\trace_o(i)$ cannot be $a$, since by property (2b), the fact that $\trace(i) \in \othertasks{b}$ implies that also $\trace_o(i) \in \othertasks{b}$.}
\end{proof}

\begin{restatable}{thm}{thmalgothree}
\label{thm:mutation}
Let $\varphi$ be a \LTLp formula, $\trace_o$ a process trace over $\tasks$ s.t.~$\trace_o \models \varphi$, $D=\{\mathcal{D}_i\}\subset \tasks^{|\trace_o|}$ the domains of each gene, and $p_\text{mut}\in\mathbb{R}_{[0,1]}$.
Assume that Algorithm~\ref{alg:mutation_operator} is invoked by passing 
$\trace_o,\varphi,p_\text{mut},D,S$ as input (with $S \in \set{\text{\firstheur,\secondheur}}$), and that it returns $\trace'_o$. Then $\trace'_o \models \varphi$.
\end{restatable}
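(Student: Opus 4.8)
The plan is to split on the mutation strategy $S$, since the two strategies warrant different arguments. For $S=\firstheur$, I would reduce to Theorem~\ref{thm:crossover}. Inspecting Algorithm~\ref{alg:mutation_operator}, the output $\trace'_o$ relates to the input $\trace_o$ by exactly the three structural properties that, in the proof of Theorem~\ref{thm:crossover}, the crossover offspring enjoys with respect to the query instance: (1) $\length(\trace'_o)=\length(\trace_o)$, since only genes are overwritten, never inserted or deleted; (2a) if $\trace_o(i)\in\activetasks{\varphi}$ then $\trace'_o(i)=\trace_o(i)$, because the guard $\trace_o(i)\notin\activetasks{\varphi}$ blocks the \firstheur branch; and (2b) if $\trace_o(i)\in\othertasks{\varphi}$ then $\trace'_o(i)\in\othertasks{\varphi}$, because a mutated gene is drawn from $\D_i\setminus\activetasks{\varphi}\subseteq\tasks\setminus\activetasks{\varphi}=\othertasks{\varphi}$ (and is unchanged otherwise). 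Given (1), (2a), (2b), the verbatim \emph{reductio} of Theorem~\ref{thm:crossover} applies — a violation of $\varphi$ would force an atomic sub-formula $\atask\in\activetasks{\varphi}$ to flip its truth value at some instant, which (2a)/(2b) preclude — yielding $\trace'_o\models\varphi$.

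For $S=\secondheur$ I would argue through the run of the \DFA $\aut_\varphi$. Let $n=\length(\trace_o)$; since $\trace_o\models\varphi$, the unique run $q_0,q_1,\ldots,q_n$ of $\aut_\varphi$ on $\trace_o$ is accepting, i.e.\ $q_n\in\fstates$. Write $\trace^{(k)}$ for the array after the loop of Algorithm~\ref{alg:mutation_operator} has processed positions $1,\ldots,k$, so $\trace^{(0)}=\trace_o$ and $\trace^{(n)}=\trace'_o$. I would prove by induction on $k$ the invariant that the run of $\aut_\varphi$ on $\trace^{(k)}$ is still $q_0,\ldots,q_n$. In the inductive step, the call $\textsc{SafeAct}(\trace^{(k-1)},k,\aut_\varphi)$ (Algorithm~\ref{alg:safemoves}) first traverses $\aut_\varphi$ on $\trace^{(k-1)}(\colon k-1)$; by the induction hypothesis together with determinism of $\aut_\varphi$ (the run on a prefix is the prefix of the run), this traversal ends in $q_{k-1}$. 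Moreover position $k$ has not been touched during steps $1,\ldots,k-1$, so $\trace^{(k-1)}(k)=\trace_o(k)$, whence the state $q'$ computed inside \textsc{SafeAct} equals $\trans(q_{k-1},\trace_o(k))=q_k$; consequently $\Sigma^{\text{safe}}=\{\atask\in\tasks\mid\trans(q_{k-1},\atask)=q_k\}$. Whatever value is then written at position $k$ — a sample from $\D_k\cap\Sigma^{\text{safe}}\subseteq\Sigma^{\text{safe}}$, or, should that set be empty, the unchanged $\trace_o(k)$, which also lies in $\Sigma^{\text{safe}}$ — still sends $q_{k-1}$ to $q_k$, while all positions $\neq k$ are unchanged. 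Hence the run of $\aut_\varphi$ on $\trace^{(k)}$ is again $q_0,\ldots,q_n$, closing the induction. Instantiating the invariant at $k=n$ shows $\aut_\varphi$ accepts $\trace'_o$ (its run ends in $q_n\in\fstates$), and since $\aut_\varphi$ recognises exactly the traces satisfying $\varphi$, we conclude $\trace'_o\models\varphi$.

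The main obstacle I anticipate is the bookkeeping in the \secondheur case: one must be careful that \textsc{SafeAct} is invoked on the \emph{partially} mutated array rather than on $\trace_o$, argue (via the induction hypothesis) that the already-rewritten prefix still drives $\aut_\varphi$ into the correct state $q_{k-1}$, and observe that the gene currently under consideration is still $\trace_o(k)$ at the moment \textsc{SafeAct} reads it, so that $q'=q_k$ and the set of safe activities is computed against the intended target state. A minor side point, handled in passing above, is the degenerate case of an empty sampling domain, in which no effective mutation occurs and the invariant is trivially preserved, so it poses no threat to correctness.
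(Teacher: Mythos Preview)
Your proposal is correct and follows essentially the same strategy as the paper: split on $S$, reduce \firstheur to the argument of Theorem~\ref{thm:crossover} via properties (1), (2a), (2b), and for \secondheur show that the accepting run $q_0,\ldots,q_n$ of $\aut_\varphi$ on $\trace_o$ is preserved. Your treatment of \secondheur is in fact more careful than the paper's, which argues only for a single replacement at one instant $i$; your explicit induction on $k$ over the partially mutated array $\trace^{(k)}$, together with the observation that position $k$ is still untouched when \textsc{SafeAct} reads it, cleanly handles the fact that Algorithm~\ref{alg:mutation_operator} may mutate several positions in sequence.
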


\begin{proof}
\btext{Correctness of \firstheur is proven analogously of Theorem~\ref{thm:crossover}. Correctness of \secondheur derives directly from the correspondence between the traces that satisfy $\varphi$, and the traces accepted by the \DFA of $\varphi$, $\aut_\varphi$. From the definition of \DFA acceptance, we have that since trace $\trace_o$ satisfies $\varphi$, there is a sequence $\istate, \ldots, \state_{n}$ of states of $\aut_\varphi$, such that:
\begin{inparaenum}[\itshape (i)]
\item the sequence starts from the initial state $\istate$ of $\aut$;
\item the sequence culminates in a last state, that is, $\state_{n} \in \fstates$;
\item for every $i \in \set{1,\ldots,n}$, we have $\trans(q_{i-1},\trace_o(i)) = q_{i}$. 
\end{inparaenum}}

\btext{From Algorithm~\ref{alg:mutation_operator}, we have that trace $\trace_o$ is mutated through \secondheur by selecting an instant $i$, and allowing for replacing $\trace_o(i)$ with an activity $a$ returned from Algorithm~\ref{alg:safemoves}. From Algorithm~\ref{alg:safemoves}, we know that activity $a$ is returned if the following property holds: $\trans(\state_{i-1},a) = q_{i}$. This means that the mutated trace $\trace'_o$ that replaces $\trace_o(i)$ with $a$, and maintains the rest identical, is accepted by $\aut_\varphi$, with the same witnessing sequence of states $\istate, \ldots, \state_{n}$. From the correspondence between $\aut_\varphi$ and the \LTLp semantics of $\varphi$, we thus have $\trace'_o \models \varphi$.}
\end{proof}




We illustrate the two proposed strategies with formula $\formulacheck$ of the running example of \S\ref{sec:motexample}. 
Based on the \DFA of the formula (Fig.~\ref{fig:dfa}), we know that, from the initial state $\state_0$, if \automaticcheck happens, 
we reach the state $\state_1$, which is a final state, where we can repeat any activity in $\tasks$. In $\state_0$ we can 
perform any other activity in $\othertasks{\varphi_{chk}}$, i.e., different from \automaticcheck and \manualcheck, remaining in $\state_0$. However, if from $\state_0$ we perform \manualcheck, we reach $\state_2$ which is a dead-end state and thus $\trace_1$  violates 
$\varphi$.
Given the trace $\trace_1$ from our running example,
consider the mutation of its fourth component $\trace_1(4)=\informphone$.
The two strategies for the mutation operator give the following mutation possibilities: in the \firstheur strategy we exclude activities in $\tasks_\varphi$ from the mutation, so we can mutate $\trace_1(4)$ with activities in $\mathcal{D}_4\setminus\Sigma_\varphi$; in the \secondheur strategy the transition in the \DFA associated with $\trace_1(4)$ is $\delta(\state_1, \trace_1(4)) = \state_1$. Since from $\state_1$ all activities give the same transition, that is,  $\delta(\state_1, \atask)=\state_1$ for every $\atask\in\Sigma$, we can mutate $\trace_1(4)$ with the entire $\mathcal{D}_4$. 

\section{Evaluation}
\label{sec:eval}

To evaluate the approaches for generating counterfactuals explanations incorporating temporal background knowledge, we focus on answering the following questions:

\begin{enumerate}
    \item[\textbf{RQ1}] How do the proposed methods compare with a standard genetic algorithm?
    \item[\textbf{RQ2}] How do the proposed methods compare with a baseline strategy of ``generation and check'', which ensures the satisfaction of the temporal background knowledge?
\end{enumerate}
Both questions are assessed in terms of generation time and quality of the generated counterfactuals (see below). 

The goal of \textbf{RQ1} is to assess whether the enforcement of temporal background knowledge has an impact on the quality of the generated counterfactuals when compared to traditional GA approaches. The goal of \textbf{RQ2} is to evaluate the proposed strategy, which integrates GAs and \DFA against an iterative combination of GAs and a checker function that generates the counterfactual and then checks compliance in a trial-and-error fashion. While a comparison with a standard GA in terms of compliance is out of scope of this paper, as these algorithms are not built to ensure this property, \textbf{RQ1} also enables us to discuss also this aspect.

\paragraph{Baselines, Datasets and Evaluation metrics}
\label{sec:metrics}
We introduce two baseline methods.
To answer the first research question, we use a standard GA, \ltlfbase (\ltlfbaseshort), which uses the fitness function Eq.~\eqref{eq:sobjadapted}, and the
standard crossover and mutation operators from~\cite{geco}~\footnote{\btext{For compatibility reasons, the same fitness function is also used in the proposed methods, \firstheur and \secondheur, even though for them the value of compliance is always one since the generated offsprings $\tau_o$ always satisfy $\varphi$.}}. To answer to the second research question we employ \mar (\marshort) which pairs the GA method with \sat through a trial-and-error mechanism: the generated trace is checked against $\varphi$, and the mutation is repeated until \sat.

Experiments are conducted using three datasets commonly used in Process Mining, with details reported in Table~\ref{tab:datasets}: 
\emph{Claim Management}~\cite{rizzi2020lime} is a synthetic dataset pertaining to a claim management process, where accepted claims are labelled as \textit{true} and rejected claims as \textit{false};
\emph{BPIC2012}~\cite{bpic2012} and \emph{BPIC2017}~\cite{bpic2012}
two real-life datasets about a
loan application process, where traces with accepted loan offers are labelled as \textit{true}, and declined offers as \textit{false}.

\begin{table}[tbp]
\centering
\scalebox{0.935}{
{\fontsize{9pt}{9pt}\selectfont
\begin{tabular}{l@{\hskip 0.1in}S[table-format=5]@{\hskip 0.1in}ccl}
\toprule
\textbf{Dataset}  & {\textbf{Traces}} & \textbf{Avg. Len.}  & $|\tasks|$    & \textbf{Used Prefixes} \\ \midrule
Claim Management       & 4800             & 11                       & 16 & 7, 10, 13, 16 \\ 
BPIC2012             & 4685             & 35                       & 36   & 20, 25, 30, 35                                                                                                      \\ 
BPIC2017                        & 31413            & 35                       & 26   & 20, 25, 30, 35                                                                                            \\ \bottomrule
\end{tabular}}}
\caption{Summary of the dataset characteristics.}
\label{tab:datasets}
\end{table}

We conduct experiments with different datasets, trace lengths, 
as well as with \LTLp formulae with sizes of $\tasks_\varphi$. 
The experiments are performed over traces with variable prefix lengths, reported in Table~\ref{tab:datasets}, testing how techniques perform on average with varying amounts of information~\cite{irene}. 
To assess the impact of the number of generated counterfactuals, we test different settings by generating 5, 10, 15, and 20 counterfactuals~\cite{guidotti2022counterfactual}.  

To evaluate the counterfactual approaches, we use five metrics from~\cite{caisepaper}: \emph{Distance} \eqref{eq:dist}; lower is better, \emph{Sparsity} \eqref{eq:spars}; lower is better, \emph{Implausibility} \eqref{eq:implaus}; lower is better, \emph{Diversity} \eqref{eq:dive}; higher is better, and \emph{Runtime} in seconds; lower is better. These metrics refer to a single $\trace$ and are averaged across the set of generated counterfactuals.~\btext{In our experiments we omit the hit rate of the counterfactual set (i.e., whether $|\mathcal{C}| = t$), as in all our experiments the hit rate was always 100\%.} 

\paragraph{Experimental procedure} 
For each dataset, \LTLp formula $\varphi$, and 
prefix length, we split the data into $ 70 \% - 10 \% - 20 \%$ 
into training, validation, and testing, using a chronological  order split. A 
XGBoost model is trained and optimised using hyperparameter optimisation to identify the best model configuration for each dataset, prefix length, and encoding combination. 
The 
set $\mathcal{T}$ used to train the XGBoost model is used as input for the counterfactual generation methods.
We tested different \LTLp formulae $\varphi$, with different coverage percentages of the possible different activities $|\Sigma_\varphi|/|\Sigma|$~\footnote{\btext{Concerning the impact of the complexity of a formula on our work there are two distinct aspects to be considered. First, formula complexity impacts on the construction of the DFA, because the size of the DFA is, in the worst-case, doubly exponential in the length of the formula~\cite{DeGV13}. However, this is orthogonal to our approach, as we consider that the DFA has been already constructed~\cite{DeDM14,fuggitti-ltlf2dfa}. In addition, it is well-known that, despite this worst-case complexity, the size of the DFA is often polynomial in the length of the formula, and that such off-the-shelf techniques incorporate several optimizations~\cite{DGF21}. Second, it is less meaningful to relate metrics on the syntactic complexities of a formula with performance, as their interaction occurs at the semantic level. In this respect, focusing on coverage reflects the intuition that when a modeller explicitly mentions an activity in a formula, they do so to express constraints on when such an activity must/can be executed (i.e., a modeller would not express formula “true”, which has empty formula signature, with the equivalent formula $\bigvee_{a_i \in \Sigma} a_i$, which has the whole alphabet as formula signature). In this sense, using more activities (i.e., having a larger formula signature) correlates with “constraining more”, which in turn impacts on performance.
This motivates the choice of coverage, and in turn why in our experiments coverage influences performance.
}}.~\btext{The specific $\varphi$ formulas for each dataset are presented in the code repository linked in the beginning of the paper.}
\btext{Regarding the coefficients in Eq.~\eqref{eq:sobjadapted}, after testing multiple configurations, the final configuration was set to $\alpha= 0.5,\beta = 0.5,\gamma = 0.5,\delta = 0.5$ to give all objectives the same weight.}

Next, 15 instances are sampled from the test set and used for the counterfactual generation, one trace $\trace$ at a time, while the counterfactuals are evaluated using the evaluation framework.
Experiments were run on a M1 with 16GB RAM.
For the GA setting, we initialise the population through a hybrid approach: selecting close points from the reference population or, if unavailable, by randomly generating traces.
We set the number of generations to $100$, $p_\text{c} = 0.5$, $p_\text{mut} = 0.2$.
In \emph{population selection}, the top 50\% of the population, w.r.t.~the fitness function, moves to the next generation. Termination occurs at the max generation number or if no significant performance improvement occurs.

We assess differences using statistical tests: we perform Wilcoxon signed-rank tests~\cite{wilcoxon} for pairwise comparisons, with p-values adjusted by the Bonferroni correction~\cite{bonferri}. Methods are then ranked by performance on each metric, allowing for clear comparison.


\section{Results}
\label{sec:results}

\begin{table*}[t]
\centering
\scalebox{0.7}{
\begin{tabular}{llcccccccccccc}
\toprule
\multirow{2}{*}{Cover.}  &  \multirow{2}{*}{Metric} & \multicolumn{4}{c}{Claim Management} & \multicolumn{4}{c}{BPIC2012} &  \multicolumn{4}{c}{BPIC2017}  \\ 
\cmidrule(lr){3-6} \cmidrule(lr){7-10} \cmidrule(lr){11-14}
 & & \ltlfbaseshort & \marshort & \firstheur & \secondheur  & \ltlfbaseshort  & \marshort & \firstheur & \secondheur  & \ltlfbaseshort & \marshort & \firstheur & \secondheur \\
\midrule
 \multirow{5}{*}{10\%} 
 & Dist. &
  $\winner{0.48}\rank{1}$ & $\winner{0.50}\rank{1}$ & $\winner{0.50}\rank{1}$ & $\winner{0.50}\rank{1}$ &
  $0.54\rank{4}$ & $0.50\rank{2}$ & $\winner{0.47}\rank{1}$ & $0.50\rank{2}$ &
  $0.59\rank{4}$ & $0.50\rank{2}$ & $\winner{0.44}\rank{1}$ & $0.50\rank{2}$ \\ 
& Spars. &
   $\winner{2.50}\rank{1}$ & $\winner{2.58}\rank{1}$ & $\winner{2.63}\rank{1}$ & $\winner{2.58}\rank{1}$ &
   $7.69\rank{4}$ & $\winner{6.95}\rank{1}$ & $\winner{6.79}\rank{1}$ & $\winner{6.97}\rank{1}$ &
   $6.95\rank{4}$ & $\winner{5.60}\rank{1}$ & $\winner{5.20}\rank{1}$ & $5.85\rank{3}$ \\
& Impl. &
   $\winner{8.49}\rank{1}$ & $8.88\rank{2}$ & $8.84\rank{2}$ & $8.92\rank{2}$ &
  $\winner{7.49}\rank{1}$ & $\winner{7.37}\rank{1}$ & $\winner{7.26}\rank{1}$ & $\winner{7.38}\rank{1}$ &
  $\winner{7.51}\rank{1}$ & $\winner{6.81}\rank{1}$ & $\winner{6.59}\rank{1}$ & $\winner{6.86}\rank{1}$ \\
& Dive &
   $0.48\rank{4}$ & $\winner{0.54}\rank{1}$ & $\winner{0.54}\rank{1}$ & $\winner{0.54}\rank{1}$ &
   $0.35\rank{4}$ & $\winner{0.40}\rank{1}$ & $\winner{0.41}\rank{1}$ & $\winner{0.40}\rank{1}$ &
   $\winner{0.58}\rank{1}$ & $0.49\rank{2}$ & $0.44\rank{4}$ & $0.49\rank{2}$ \\ 
 & Runtime &
  $\winner{2.53}\rank{1}$ & $4.65\rank{4}$ & $4.38\rank{2}$ & $4.09\rank{2}$ &
   $25\rank{3}$ & $90\rank{4}$ & $\winner{12}\rank{1}$ & $17\rank{2}$ &
   $\winner{29}\rank{1}$ & $78\rank{2}$ & $57\rank{2}$ & $65\rank{2}$ \\
\midrule
\multirow{5}{*}{25\%} 
& Dist. &
   $\winner{0.48}\rank{1}$ & $\winner{0.47}\rank{1}$ & $\winner{0.50}\rank{1}$ & $\winner{0.47}\rank{1}$ &
   $0.47\rank{2}$ & $0.38\rank{4}$ & $\winner{0.30}\rank{1}$ & $0.36\rank{2}$ &
  $0.58\rank{4}$ & $0.33\rank{2}$ & $\winner{0.22}\rank{1}$ & $0.31\rank{2}$ \\
& Spars. &
   $2.50\rank{1}$ & $2.45\rank{1}$ & $2.58\rank{4}$ & $2.42\rank{1}$ &
   $7.01\rank{3}$ & $5.24\rank{3}$ & $\winner{4.38}\rank{1}$ & $4.92\rank{2}$ &
   $6.77\rank{4}$ & $4.02\rank{2}$ & $\winner{2.72}\rank{1}$ & $3.82\rank{2}$ \\
& Impl. &
   $\winner{8.46}\rank{1}$ & $8.80\rank{4}$ & $\winner{8.83}\rank{1}$ & $\winner{8.70}\rank{1}$ &
   $\winner{6.38}\rank{1}$ & $9.73\rank{2}$ & $9.67\rank{2}$ & $9.70\rank{2}$ &
   $7.39\rank{4}$ & $\winner{5.52}\rank{1}$ & $\winner{4.94}\rank{1}$ & $\winner{5.40}\rank{1}$ \\
& Dive &
   $0.47\rank{3}$ & $0.50\rank{2}$ & $\winner{0.52}\rank{1}$ & $0.49\rank{3}$ &
   $0.36\rank{2}$ & $\winner{0.38}\rank{1}$ & $0.32\rank{4}$ & $0.35\rank{2}$ &
  $\winner{0.60}\rank{1}$ & $0.34\rank{2}$ & $0.26\rank{4}$ & $0.33\rank{2}$ \\
& Runtime &
   $\winner{2.76}\rank{1}$ & $4.41\rank{4}$ & $\winner{3.10}\rank{1}$ & $\winner{3.23}\rank{1}$ &
   $\winner{154}\rank{1}$ & $\winner{211}\rank{1}$ & $\winner{133}\rank{1}$ & $\winner{191}\rank{1}$ &
   $\winner{29.1}\rank{1}$ & $81.6\rank{3}$ & $54.4\rank{2}$ & $75.6\rank{3}$ \\
\midrule
\multirow{5}{*}{50\%} 
& Dist. &
   $0.41\rank{4}$ & $0.27\rank{2}$ & $\winner{0.22}\rank{1}$ & $0.27\rank{2}$ &
   $0.55\rank{4}$ & $0.20\rank{3}$ & $\winner{0.24}\rank{1}$ & $0.30\rank{2}$ &
   $0.6\rank{4}$ & $0.17\rank{2}$ & $\winner{0.09}\rank{1}$ & $0.18\rank{2}$ \\
& Spars. &
   $2.19\rank{4}$ & $1.41\rank{2}$ & $\winner{1.17}\rank{1}$ & $1.40\rank{2}$ &
   $7.83\rank{4}$ & $5.55\rank{3}$ & $\winner{3.52}\rank{1}$ & $4.11\rank{2}$ &
  $7.11\rank{4}$ & $2.06\rank{2}$ & $\winner{1.06}\rank{1}$ & $2.16\rank{2}$ \\
& Impl. &
   $7.17\rank{2}$ & $7.17\rank{2}$ & $\winner{7.01}\rank{1}$ & $7.16\rank{2}$ &
   $\winner{7.58}\rank{1}$ & $8.51\rank{4}$ & $\winner{7.28}\rank{1}$ & $\winner{7.13}\rank{1}$ &
   $7.95\rank{4}$ & $4.86\rank{2}$ & $\winner{4.38}\rank{1}$ & $5.14\rank{2}$ \\
& Dive &
   $\winner{0.43}\rank{1}$ & $0.30\rank{2}$ & $0.25\rank{4}$ & $0.30\rank{2}$ &
   $\winner{0.36}\rank{1}$ & $0.19\rank{4}$ & $0.25\rank{2}$ & $0.27\rank{2}$ &
   $\winner{0.61}\rank{1}$ & $0.18\rank{2}$ & $0.11\rank{4}$ & $0.19\rank{2}$ \\
& Runtime &
   $\winner{3.39}\rank{1}$ & $9.78\rank{4}$ & $4.80\rank{2}$ & $5.18\rank{2}$ &
   $535\rank{2}$ & $1500\rank{4}$ & $\winner{451}\rank{1}$ & $780\rank{2}$ &
   $\winner{288}\rank{1}$ & $566\rank{3}$ & $\winner{261}\rank{1}$ & $463\rank{3}$ \\
\bottomrule
\end{tabular}}
\caption{Performance metrics across different datasets. The ranking position of each method is indicated in parentheses.}
\label{tab:diffs}
\end{table*}

Table~\ref{tab:diffs} shows the average values of each metric for each method. In parentheses, we indicate the afferent rank of each value, indicating different rankings only for statistically significant differences.~\btext{The best performing strategy is highlighted in bold. If multiple values are in bold, this suggests no statistically significant difference between the methods in terms of the respective metric (or, in other words, that there is no statistically significant difference between the absolute best of the row and the other bold values). This also directly translates to multiple methods having the same rank.
}

We show the evolution of the results with three levels of coverage $|\tasks_\varphi|/|\tasks|$: 10\%, 25\%, and 50\%.

\paragraph{Answering RQ1.} We start by considering the time performance.
Looking at the \run of the three methods, we observe that \ltlfbase and \firstheur perform similarly, with \ltlfbase being among the best performers 7 out of 9 times, while \firstheur 5 out of 9 times. \secondheur instead shows an increase in the time required for generating the counterfactuals, especially for large coverage rates.
Regarding quality, a different story emerges: \firstheur and \secondheur demonstrate a good performance, with \firstheur showing a consistent ability to generate closer, sparser, and more plausible counterfactuals w.r.t., \ltlfbaseshort. This performance is more pronounced in real datasets (BPIC2012, and especially BPIC2017), where the complexity and length of traces are challenging the capability of \ltlfbaseshort. On the contrary, in the simpler Claim Management dataset, the superiority of \firstheur becomes more apparent only for higher coverage levels, while for lower coverages, \ltlfbaseshort exhibits better counterfactual quality results. Concerning \secondheur, it excels in balancing counterfactual quality with diversity, particularly in BPIC2012, making it a viable candidate in scenarios where both aspects are critical.

A final remark on the \sat of the counterfactuals generated with \ltlfbase. Assessing \ltlfbase's \sat is methodologically complex and beyond this paper's scope, as it is not designed to ensure compliance, which may depend upon several factors, including the \LTLp formulae used. Nonetheless, it is worth noting that in our experiments \ltlfbase managed to reach a compliance ranging on average from $80\%$ to $99\%$, depending upon the coverage used. This hints that \ltlfbase can achieve good overall \sat but cannot always guarantee satisfaction of $\varphi$. 

\paragraph{Answering RQ2.}
Overall, the \run performance analysis reveals that \firstheur and \secondheur demonstrate significantly low \run across all datasets and coverage levels, enabling a quick counterfactual generation. This efficiency is particularly pronounced at higher coverage levels, where \firstheur and \secondheur maintain fast processing times, thanks to their optimised mutation processes and reduced need for extensive checks. In contrast, \mar exhibits substantially higher runtimes, especially notable at 50\% coverage, 
due to the complexity of performing the trace validation after each mutation.

In terms of quality of the generated counterfactuals, \marshort performs similarly to our methods, particularly 
when the task coverage is lower. As complexity increases, particularly at 50\% task coverage, \firstheur and \secondheur outperform \marshort. The counterfactuals generated by \marshort tend to have higher \spars, requiring more modifications to the original trace. In contrast, \firstheur and \secondheur are better at generating counterfactuals with less \spars. \marshort shows good performance in \impl, often ranking close to or at the top. However, \firstheur and \secondheur still maintain a good level of \impl performance, indicating their ability to generate plausible counterfactuals as well, but with added efficiency and sparsity.
Finally, \marshort does not show a significant improvement in \dive, compared to our two  strategies.
Thus, \firstheur and \secondheur offer a good trade-off between \run and quality of the generated counterfactuals, which remains overall comparable to that of \marshort.

\section{Related Work}
\label{sec:relwork}

\btext{In this section, we review related work on counterfactual explanations, focusing on their use in Predictive Process Monitoring (PPM) and temporal data. We begin by examining general techniques for generating counterfactuals, distinguishing between case-based and generative approaches. We then explore how Genetic Algorithms (GAs) are employed in counterfactual generation, noting their advantages and limitations. Lastly, we dive into recent developments in counterfactual explanations for temporal process data.}

\paragraph{\btext{Counterfactual explanations in XAI}}
Counterfactual explanations identify minimal changes to alter a model's prediction~\cite{wachter2017counterfactual}.Techniques for generating counterfactuals fall into two categories: case-based, which find counterfactuals within the sample population, and generative, creating them through~\btext{optimisation-based techniques (e.g., hill-climbing algorithms)}~\cite{verma2020counterfactual}. 

Genetic Algorithms (GAs) are widely used for generating counterfactuals by optimizing a population of potential candidates through a fitness function~\cite{moc,geco}. Both single-objective and multi-objective GA solutions are available, with single-objective solutions converging faster due to lower complexity~\cite{geco} and multi-objective GAs providing multiple optimal solutions using a Pareto Front~\cite{moc}. One key benefit of GAs is their ability to maximize population diversity, yet their stochastic nature often leads to inconsistent results.~\btext{Moreover, unlike gradient-based optimisation techniques for counterfactual generation~\cite{dice}, which require the use of differentiable models to compute counterfactuals, GAs do not require access to the model’s parameters or gradient computation. As such, they are not limited to differentiable models and do not require gradient computations by construction.}

Despite the advancements in the literature that try to incorporate plausibility and causality constraints~\cite{geco}, 
current methods have limitations in ensuring the feasibility of counterfactuals. Feasibility is crucial for generating valid counterfactuals, typically enforced through restricting the data manifold, specifying constraints, or minimizing distances to training set points~\cite{datamanifoldcfs}. 

As mentioned by~\cite{priorxai}, 
no 
background knowledge injection has been explored so far for the generation of counterfactual explanations, which is a challenge also when focusing on counterfactual explanations generated with GA approaches~\cite{Zhou2023EvolutionaryAT}. 

\paragraph{\btext{Counterfactual explanations for temporal data}}
\btext{Four works so far have tackled the counterfactual explanation problem in the PPM domain~\cite{loreley,dice4el,created,caisepaper}.}

\btext{The first paper introduces LORELEY, an adaptation of the Local Rule-Based Explanations (LORE) framework~\cite{guidotti2018local}, which generates counterfactual explanations leveraging a surrogate decision tree model using a genetically generated neighbourhood of artificial data instances to be trained~\cite{loreley,guidotti2018local}. 
The prediction task the authors address is the one of multi-class outcome prediction. To ensure the generation of feasible counterfactuals, LORELEY imposes process constraints in the counterfactual generation process by using the whole prefix of activities as a single feature, encoding the whole control-flow execution as a variant of the process.}

\btext{The second work presents \textsc{dice} for Event Logs (DICE4EL)~\cite{dice4el}. DICE4EL extends one of the methods found within \textsc{dice}~\cite{dice}, specifically, the gradient-based optimisation method by adding a feasibility term to ensure that the generated counterfactuals maximise the likelihood of belonging to the training set. To do so, DICE4EL leverages a Long-Short Term Memory (LSTM)-based predictive model as it requires gradients for the counterfactual explanation search. The prediction task addressed in the paper is that of next activity prediction with a milestone-aware focus.}

\btext{The third, the most recent approach for generating counterfactual explanations for PPM, CREATED, leverages a genetic algorithm to generate candidate counterfactual sequences~\cite{created}. To ensure the feasibility of the data, the authors build a Markov Chain, where each event is a state. Then, using the transition probabilities from one state to another, they can determine how likely a counterfactual is, given the product of the states.
}

\btext{The fourth and final work looked into proposing an evaluation framework for measuring counterfactual explanations in PPM by proposing a novel metric measuring the conformance of counterfactual generation techniques~\cite{caisepaper}. As noted by the authors, no previous approaches make use of temporal background knowledge explicitly when generating counterfactual explanations. However, background knowledge can play an important role in ensuring the feasibility of the generated counterfactuals, especially from the perspective of sequences of activities, where different constraints may have a different impact on the outcome of a trace execution. The present work aims to specifically fill this gap identified in the literature.}

\section{Conclusions}
\label{sec:conclusions}
We have introduced a novel framework for generating counterfactual traces in temporal domains, guaranteeing that they respect background knowledge captured in a suitable temporal logic. Our approach blends automata-theoretic techniques of this logic with genetic algorithms. The results of the evaluation show that the strategies we propose 
ensure that background formulae remain satisfied by the generated counterfactual traces, while these traces also maintain or improve general counterfactual explanation desiderata compared to state-of-the-art methods.

In the future, we aim to develop more efficient genetic operators strategies.
We also plan to extend our approach to richer temporal logics dealing not only with activities, but also with numerical data, as in \cite{FMPW23}. This appears viable given the basis provided here, in the light of the automata-theoretic characterisation of such logics \cite{FMPW23}, as well as the fact that counterfactual desiderata can be seamlessly redefined over numerical data.

\section*{Acknowledgments}
This work is partially funded by the NextGenerationEU FAIR PE0000013 project MAIPM (CUP C63C22000770006), by the PRIN MIUR project PINPOINT Prot.\~2020FNEB27, and by the PNRR project FAIR - Future AI Research (PE00000013), under the NRRP MUR program funded by the NextGenerationEU.

\bibliography{main}

\clearpage
\appendix




\end{document}